
\documentclass[10pt, conference, compsocconf]{IEEEtran}
\usepackage{amsmath,amscd,amssymb,graphics}
\usepackage{mathrsfs} 

\newtheorem{theorem}{Theorem}[section]
\newtheorem{corollary}[theorem]{Corollary} 
\newtheorem{lemma}[theorem]{Lemma}

 \newtheorem{definition}[theorem]{Definition}
\newtheorem{remark}[theorem]{Remark}


\newcommand{\abs}[1]{\lvert#1\rvert}

\def\E{{\mathbb{E}}}

\def\R{{\mathbb R}}
\def\e{\varepsilon}

\begin{document}
%
\title{Predictive PAC learnability: a paradigm for learning from exchangeable input data}


\author{\IEEEauthorblockN{Vladimir Pestov}
\IEEEauthorblockA{Department of Mathematics and Statistics\\
University of Ottawa\\
Ottawa, Ontario, Canada\\
vpest283@uottawa.ca}
}

\maketitle

\begin{abstract}
Exchangeable random variables form an important and well-studied generalization of i.i.d. variables, however simple examples show that no nontrivial concept or function classes are PAC learnable under general exchangeable data inputs $X_1,X_2,\ldots$. Inspired by the work of Berti and Rigo on a Glivenko--Cantelli theorem for exchangeable inputs, we propose a new paradigm, adequate for learning from exchangeable data: predictive PAC learnability. A learning rule $\mathcal L$ for a function class $\mathscr F$ is predictive PAC if for every $\e,\delta>0$ and each function $f\in {\mathscr F}$, whenever $\abs{\sigma}\geq s(\delta,\e)$, we have with confidence $1-\delta$ that the expected difference between $f(X_{n+1})$ and the image of $f\vert\sigma$ under $\mathcal L$ does not exceed $\e$ conditionally on $X_1,X_2,\ldots,X_n$. Thus, instead of learning the function $f$ as such, we are learning to a given accuracy $\e$ the predictive behaviour of $f$ at the future points $X_i(\omega)$, $i>n$ of the sample path. Using de Finetti's theorem, we show that if a universally separable function class $\mathscr F$ is distribution-free PAC learnable under i.i.d. inputs, then it is distribution-free predictive PAC learnable under exchangeable inputs, with a slightly worse sample complexity. 
\end{abstract}

\begin{IEEEkeywords}
Exchangeable random variables, de Finetti theorem, predictive PAC learnability.
\end{IEEEkeywords}

%
\IEEEpeerreviewmaketitle

\section{Introduction}

In the classical theory of statistical learning as initiated in \cite{VC1971,BEHW} (see \cite{vapnik02} for a historical and philosophical perspective) data inputs are traditionally modelled by a sequence of i.i.d. random variables $(X_i)$. Generalizating this approach usually involves easing the i.i.d. restriction on the sequence of inputs, all the while trying to obtain the same conclusions as in the classical theory, namely the uniform convergence of empirical means and subsequently the PAC learnability of a concept or a function class under the usual combinatorial restrictions in terms of shattering. For instance, the i.i.d. condition can be relaxed to that of being an ergodic stationary sequence (\cite{pollard}, p. 9), or a $\beta$-mixing sequence \cite{vidyasagar}. As to $\alpha$-mixing sequences, they are known to result in the same PAC learnable function classes under a single distribution \cite{vidyasagar05}, although it is still unknown whether uniform convergence of empirical means takes place \cite{yu}. An interesting recent investigation is \cite{kontorovich}.

However, at some point this approach hits a wall. Among the best studied classes of dependent stationary random variables are exchangeable random variables  \cite{dF37}; \cite{billingsley}, p. 473; \cite{kallenberg}, \cite{kingman}. A sequence of r.v. $(X_i)$ is {\em exchangeable,} if for every finite sequence $(i_1,i_2,\ldots,i_n)$ of integers the joint distributions of $(X_{i_1},X_{i_2},\ldots,X_{i_n})$ and of $(X_1,X_2,\ldots,X_n)$ are the same. According to the famous De Finetti theorem \cite{dF37,HS}, a sequence $(X_i)$ is exchangeable if and only if the joint distribution $P$ on $\Omega^{\infty}$ is a mixture of product distributions (that is, $(X_i)$ is a mixture of a family of i.i.d. random sequences). 

A nice illustration and 
the most extreme example of an exchangeable sequence which is not i.i.d is a sequence of identical copies of one and the same random variable, $X_i=X$, $i=1,2,\ldots$. The joint distribution of this process is a measure supported on the diagonal of the infinite product space $\Omega^{\infty}$, which is clearly a mixture of infinite powers of all Dirac point masses on $\Omega$. 

Now, it is immediately clear that no nontrivial function class $\mathscr F$ on a domain $\Omega$ will be PAC learnable under such a data input process: almost every sample path $\bar x$ will be constant, $\bar x = (x,x,x,\ldots)$, thus revealing no information about the values of a function $f\in{\mathscr F}$ away from $x$. Consequently, if we want to be able to learn from exchangeable data inputs, the paradigm of learnability itself has to be re-examined.

A way out was shown by Berti and Rigo in their visionary note \cite{BR} where they prove that the classical Glivenko--Cantelli theorem holds for a sequence $(X_i)$ of exchangeable random variables if and only if the sequence is i.i.d. 
At the same time, they observe that the classical GC theorem is formally equivalent to the statement about the predictive distribution being approximated by the observed frequency:
\begin{eqnarray*}
\sup_{t}\vert F_n(t,\omega)-
P(X_{n+1}\leq t\Vert  
X_1,\ldots,X_n)(\omega)\vert\to 0\mbox{ a.s.}
\end{eqnarray*}
Here $F_n(t,\omega) =(1/n)\sum_{i=1}^n I_{(-\infty,t]}(X_i)$ is the empirical mean of the indicator function, and $P(\cdot\Vert X_1,\ldots,X_n)$ is the conditional probability.
As shown in \cite{BR}, in this form the statement remains valid if the r.v. $(X_i)$ are exchangeable, and the result can be considered as a conditional (or: predictive) version of the classical Glivenko-Cantelli theorem.

Since the uniform Glivenko-Cantelli theorems are at the heart of statistical learning, one would think that the approach of Berti and Rigo should have consequences for learning from exchangeable inputs. We show that this is indeed the case: by replacing PAC learnability with {\em predictive} PAC learnability, one arrives at a new broad paradigm of learnability suited for learning under exchangeable inputs. 

Say that a function class $\mathscr F$ is {\em predictively PAC learnable} under a given class $\mathcal P$ of exchangeable random processes $(X_n)$ if there exists a {\em predictive PAC learning rule} for $\mathscr F$ under $\mathcal P$, that is, a map $\mathcal L$ from the sample space $\mathcal S$ to a hypothesis class $\mathscr H$ such that 
\begin{eqnarray*}
P\{\sigma\colon 
\E\left(\vert({\mathcal L}(f\vert_\sigma)- f)(X_{n+1})\vert\,
\Vert X_1,X_2,\ldots, X_n\right)>\e \} \to 0
\end{eqnarray*}
uniformly in $f\in {\mathscr F}$ and $(X_i)\in{\mathcal P}$. This is different from PAC learnability in that the expected value of $\vert{\mathcal L}(f\vert_\sigma)- f\vert$ is replaced with the conditional expectation given $X_1,X_2,\ldots,X_n$. If in particular $(X_i)$ are i.i.d., the above definition is a reformulation of PAC learnability under the family of corresponding laws on the domain $\Omega$.

%


We show that if a function class $\mathscr F$ is distribution-free PAC learnable under the usual assumption that the data sample inputs are i.i.d., then $\mathscr F$ is predictively PAC learnable under the class of all sequences of exchangeable data inputs. Our results are obtained under the assumption that $\mathscr F$ is universally separable.


%


\section{Setting for learnability}

Here we review the PAC learnability model \cite{AB,BEHW,vapnik98,vidyasagar} in order to fix a precise setting.
The {\em domain}, or {\em instance space}, $\Omega=(\Omega,{\mathscr A})$ is a {\em measurable space,} that is, a set $\Omega$ equipped with a sigma-algebra of subsets $\mathscr A$. We will assume that $\Omega$ is a {\em standard Borel space,} that is, a complete separable metric space equipped with the sigma-algebra of Borel subsets. For intstance, without loss in generality one can always assume that $\Omega=\R^k$ is the Euclidean space.

Denote by ${\mathscr B}(\Omega,[0,1])$ the collection of all Borel measurable functions from $\Omega$ to $[0,1]$.
A {\em function class} $\mathscr F$ is a subfamily of ${\mathscr B}(\Omega,[0,1])$. 


The family $P(\Omega)$ of all probability measures on $(\Omega,{\mathscr A})$ is itself a measurable space, whose sigma-algebra is generated by the functions $\nu\mapsto \nu(A)$ from $P(\Omega)$ to $\R$, as $A$ runs over ${\mathscr A}$.

In the PAC learning model, a set $\mathcal P$ of probability measures on $\Omega$ is fixed. Usually either ${\mathcal P}=P(\Omega)$ is the set of all probability measures ({\em distribution-free learning}), or ${\mathcal P}=\{\mu\}$ is a single measure ({\em learning under a fixed distribution}). 

%

%


A {\em learning sample} is a pair $s$ consisting of a finite subset $\sigma$ of $\Omega$ and of a function on $\sigma$. It is convenient to assume that elements $x_1,x_2,\ldots,x_n\in\sigma$ are ordered, and thus the set of all samples $(\sigma,\tau)$ with $\abs\sigma=n$ can be identified with $\left(\Omega\times[0,1]\right)^n$. For $\sigma\in\Omega^n$ and a function $f\in {\mathscr F}$ we will denote $f\upharpoonright\sigma$ the sample obtained by restricting $f$ to $\sigma$.

A {\em learning rule} is a mapping 
\[{\mathcal L}\colon \bigcup_{n=1}^\infty\Omega^n\times[0,1]^n\to {\mathscr B}(\Omega,[0,1]),\]
which is measurable with regard to every Borel structure induced on ${\mathscr B}(\Omega,[0,1])$ by the distances $L^1(\mu)$, $\mu\in {\mathcal P}$.

A learning rule $\mathcal L$ is {\em consistent} if for every $f\in {\mathscr F}$ and each $\sigma\in\Omega^n$ one has 
\[{\mathcal L}(f\upharpoonright \sigma)\upharpoonright\sigma = f\upharpoonright\sigma.\]
Consistent learning rules exist for every function class $\mathscr F$ under mild measurability restrictions.


A learning rule $\mathcal L$ is {\em probably approximately correct} ({\em PAC}) for the function class $\mathscr F$ {\em under the class of measures ${\mathcal P}$} if for every $\e>0$
\[\sup_{\mu\in {\mathcal P}}\sup_{f\in {\mathscr F}}
P\left\{\sigma\in\Omega^n\colon \E_\mu\vert{\mathcal L(f\upharpoonright\sigma)}- f\vert\vert>\e \right\} \to 0\]
as $n\to\infty$. Here $P$ stands for $\mu^{\otimes n}$.

Equivalently, there is a function $s(\e,\delta)$ ({\em sample complexity} of $\mathcal L$) such that for each $f\in{\mathscr F}$ and every $\mu\in {\mathcal P}$ an i.i.d. sample $\sigma$ with $\geq s(\e,\delta)$ points has the property $\E_\mu\vert f-{\mathcal L}(f\upharpoonright \sigma)\vert<\e$ with confidence $\geq 1-\delta$.

A function class $\mathscr F$ is {\em PAC learnable} {\em under $\mathcal P$}, if there exists a PAC learning rule for $\mathscr F$ under $\mathcal P$. 

If $\mathcal P = P(\Omega)$ is the set of all probability measures, then $\mathscr F$ is said to be (distribution-free) {\em PAC learnable}. At the same time, learnability under intermediate families of measures on $\Omega$ has received considerable attention, cf. Chapter 7 in \cite{vidyasagar}.

A closely related concept to that of a PAC learnable class is that of a {\em uniform Glivenko--Cantelli} function class, that is, a function class $\mathscr F$ such that for each $\delta,\e>0$ one has, whenever $n\geq s(\delta,\e)$,
\[\sup_{\mu\in P(\Omega)}P\left\{\sup_{f\in{\mathscr F}}\left\vert \E_{\mu}(f) - \frac 1nS_n(f) \right\vert\geq \e\right\}<\delta.
\]
One also says that $\mathscr C$ has the property of {\em uniform convergence of empirical means} ({\em UCEM} property). Here $s(\delta,\e)$ is the {\em sample complexity} of the uniform Glivenko-Cantelli class (which in general has to be distinguished from the sample complexity of a learning rule).

%
%

Every uniform Glivenko--Cantelli function class is PAC learnable, for instance, every consistent learning rule for $\mathscr F$ is PAC, with the same learning sample complexity. For concept classes, the converse is also true, though not for function classes in general. 

A function class $\mathscr F$ is {\em universally separable} \cite{pollard} if it contains a countable subfamily ${\mathscr F}^\prime$ with the property that every $f\in {\mathscr F}$ is a pointwise limit of a sequence $(f_n)$ of functions from ${\mathscr F}^\prime$: for each $x\in\Omega$, one has $f_n(x)\to f(x)$ as $n\to\infty$.

Notice that in this paper, we only talk of {\em potential} learnability, adopting a purely information-theoretic viewpoint. 

\section{Exchangeable variables and de Finetti's theorem}

De Finetti's theorem, in its classical form (\cite{dF37}, Ch. IV; \cite{HS}, Th. 7.2) states that a sequence $(X_i)$ of random variables taking values in a standard Borel space $\Omega$ is exchangeable if and only if the joint distribution $P$ of the sequence is a mixture of i.i.d. distributions. More precisely, there exists a probability measure $\eta$ on the Borel space $P(\Omega)$ of probability measures on $\Omega$ (the directing measure) so that 
\begin{equation}
\label{eq:dec}
P = \int_{P(\Omega)}\theta^{\infty}\,\eta(d\theta),\end{equation}
in the sense that for every measurable function $f$ on $\Omega^\infty$ one has
\[\E(f) =\int \E_{\theta^\infty}(f)\,\eta(d\theta).\]
In this spirit, $\theta$ will denote a (random) element of $P(\Omega)$, and ``almost all $\theta$'' is to be understood in the sense of directing measure $\eta$.

A slightly different viewpoint, adopted in \cite{kallenberg}, is to fix a {\em random measure} $\nu$, that is, a measurable mapping from the basic probability space to $P(\Omega)$. Under this approach, de Finetti's theorem can be put in the following, essentially equivalent, form. Denote by $\mathscr T$ the tail sigma-field on $\Omega^\infty$. Then, conditionally on $\mathscr T$, the sequence $(X_i)$ is i.i.d.:
\[P(\omega\in\cdot\Vert {\mathscr T}) = \nu^{\infty}\mbox{ a.s.}\]

Note that if $\theta\neq\zeta$, then $\theta^\infty$ and $\zeta^\infty$ are mutually singular. This follows from a remark of Kakutani \cite{kakutani}, p. 223: fix $f$ with $\E_\theta(f)\neq\E_\zeta(f)$, then the empirical mean
\[\frac 1 n S_n(f)= \frac 1n \sum_{i=1}^n f(X_i)
\]converges at the same time $\theta^\infty$-a.s. to $\E_\theta(f)$ and $\zeta^\infty$-a.s. to $\E_\zeta(f)$. 
This observation helps to understand the decomposition (\ref{eq:dec}).

The strong law of large numbers for exchangeable variables (cf. e.g. \cite{kingman}, Eq. (2.2) on p. 185, also \cite{kallenberg}, Proposition 1.4(i)), says that 
\begin{equation}
\label{eq:slln}
\frac 1 n S_n(f)\to \E(f\Vert \mathscr T)\end{equation}
almost surely. If $P(A)=1$, then a.s. $\nu(A)=1$, that is, for almost all $\theta$, one has $\theta(A)=1$. Thus, the convergence in (\ref{eq:slln}) takes place $\theta$-a.s. for almost all $\theta\in\Theta$. One concludes:
\begin{equation}
\label{eq:constant}
\mbox{For a.e. }\theta,~~
\E(f(X_1)\vert {\mathcal T}) = \E_\theta(f)~~\theta\mbox{ a.s.}
\end{equation}
Informally, the conditional expectation $\E(f(X_1)\vert {\mathcal T})$ given the tail sigma-field is viewed by almost every non-random measure $\theta$ as a constant function, identically assuming the value $\E_{\theta}(f)$.

%
%
%

\begin{lemma} 
\label{l:tail}
Let $X_1,X_2,\ldots$ be a sequence of exchangeable random variables taking values in a standard Borel space $\Omega$. Then for every measurable function $f$ on $\Omega$, for all $i$ and all $j>n$:
\[\E\left(\E(f(X_i)\Vert {\mathscr T})\Vert X_1,\ldots,X_n\right) = \E(f(X_{j})\Vert X_1,\ldots,X_n)\]
a.s., where $\mathscr T$ is the tail sigma-field. Consequently, if $\mathscr G$ is a countable family of measurable functions, then one has
\begin{eqnarray*}
\forall f\in{\mathscr G}~~\E\left(\E(f(X_i)\Vert {\mathscr T})\Vert X_1,\ldots,X_n\right)\phantom{xxxxxxxxxx}\\
= \E(f(X_{j})\Vert X_1,\ldots,X_n)\end{eqnarray*}
almost surely.
\end{lemma}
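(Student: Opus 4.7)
The plan is to use de Finetti's theorem in the random-measure form recalled just before the lemma: conditionally on the tail $\sigma$-field $\mathscr T$, the sequence $(X_i)$ is i.i.d.\ with common law $\nu$, where $\nu$ is the $\mathscr T$-measurable random directing measure. The crucial consequence is that $\E(f(X_i)\Vert\mathscr T)=\int_\Omega f\,d\nu$ almost surely, and in particular this quantity does not depend on the index $i$. So the left-hand side of the claimed identity equals $\E\!\left(\int f\,d\nu \,\Vert\, X_1,\ldots,X_n\right)$ regardless of which $i$ we picked; my task is to show the right-hand side equals the same thing.

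For the right-hand side I would apply the tower property, inserting the richer $\sigma$-field $\sigma(X_1,\ldots,X_n)\vee \mathscr T$:
\[
\E(f(X_j)\Vert X_1,\ldots,X_n) = \E\!\left(\,\E(f(X_j)\Vert X_1,\ldots,X_n,\mathscr T)\,\Vert X_1,\ldots,X_n\right).
\]
Because $j>n$ and $(X_i)$ is conditionally i.i.d.\ given $\mathscr T$, the variable $X_j$ is conditionally independent of $(X_1,\ldots,X_n)$ given $\mathscr T$, so the inner conditional expectation collapses to $\E(f(X_j)\Vert\mathscr T)=\int f\,d\nu$. This yields $\E(f(X_j)\Vert X_1,\ldots,X_n)=\E(\int f\,d\nu\,\Vert X_1,\ldots,X_n)$, which coincides with the left-hand side computed above. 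The first part of the lemma follows a.s. for the fixed $f$, $i$, $j$, $n$.

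For the countable refinement, the exceptional null set depends on the choice of $f$, $i$, $j$, $n$, but since $\mathscr G$ is countable and $i,j,n$ range over the countable set $\mathbb N$, the union over $\mathscr G$ (with $i,j,n$ fixed) of exceptional sets is still of measure zero, so the equality holds simultaneously for all $f\in\mathscr G$ on a single set of full measure.

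The only step that requires care is the conditional independence of $X_j$ from $(X_1,\ldots,X_n)$ given $\mathscr T$; this is not a triviality but a direct consequence of de Finetti's theorem in the form adopted in the paper, namely $P(\,\cdot\,\Vert\mathscr T)=\nu^\infty$ a.s., which implies mutual conditional independence of every finite subset of coordinates. Everything else is routine manipulation with conditional expectations, together with the observation that $\E(f(X_i)\Vert\mathscr T)$ is $i$-independent.
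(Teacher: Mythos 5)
Your proof is correct, but it takes a genuinely different route from the paper's. The paper first uses exchangeability to reduce to $i=1$, $j=n+1$, then reduces to indicator functions $f=I_A$ of a generating family (identifying $\Omega$ with $\R$ and taking $A=(-\infty,t]$), and at that point cites formula (7) of Berti and Rigo, so the core computation is outsourced to the reference. You instead argue directly: the random-measure form of de Finetti's theorem, $P(\cdot\Vert\mathscr T)=\nu^\infty$ a.s., gives both that $\E(f(X_i)\Vert\mathscr T)=\int_\Omega f\,d\nu$ is independent of $i$ and that $X_j$ is conditionally independent of $(X_1,\ldots,X_n)$ given $\mathscr T$ when $j>n$; the tower property through $\sigma(X_1,\ldots,X_n)\vee\mathscr T$ then identifies both sides of the claimed identity with $\E\left(\int_\Omega f\,d\nu\,\Vert\, X_1,\ldots,X_n\right)$. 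What your route buys is a self-contained proof that handles an arbitrary bounded (or integrable) measurable $f$ in one stroke, with no monotone-class extension from indicators and no external citation; what the paper's route buys is brevity and a pointer to the stronger quantitative assertion in Berti--Rigo. The one step you flag as delicate --- that conditioning on $\sigma(X_1,\ldots,X_n)\vee\mathscr T$ collapses $\E(f(X_j)\Vert X_1,\ldots,X_n,\mathscr T)$ to $\E(f(X_j)\Vert\mathscr T)$ --- is exactly the standard characterization of conditional independence and is fully licensed by $P(\cdot\Vert\mathscr T)=\nu^\infty$ being a product measure, so there is no gap. Your treatment of the countable family by a countable union of null sets matches the paper's ``immediate'' second claim.
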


\begin{proof}
Because of exchangeability, one can assume without loss in generality that $i=1$ and $j=n+1$. Now
it is enough to establish the result for indicator functions $f=I_A$ of some generating family of Borel subsets $A\subseteq\Omega$, for instance, by identifying $\Omega$ with $\R$ and considering the intervals $A=(-\infty,t]$. In this form, the result has been proved in Berti and Rigo \cite{BR}, where a stronger assertion appears as formula (7) on p. 389. (Their function $F(t,\omega)$ is equal a.s. to $\E(I_{(-\infty,t]}(X_1)\Vert {\mathscr T})=P(X_1\leq t\Vert {\mathscr T})$, which fact follows from the definition of $F(t,\omega)$ on p. 386, line - 9 as the a.s. limit of $(1/n)S_n(I_{(-\infty,t]})$ and the strong law of large numbers (\ref{eq:slln})). The second claim is immediate.
\end{proof}

\section{Predictive PAC learnability} 

\begin{definition}
Let $X_1,X_2,\ldots$ be an exchangeable sequence of random variables with values in a standard Borel space $\Omega$. Denote $P$ the joint distribution on $\Omega^{\infty}$. We say that
a learning rule $\mathcal L$ for a function class $\mathscr F$ on $\Omega$ is {\em predictively PAC} with sample complexity $s(\delta,\e)$ (under the sequence $(X_i)$), if for every $f\in {\mathscr F}$ and each $\e,\delta>0$, whenever $n\geq s(\delta,\e)$, one has
\begin{equation}
P\{\sigma\colon 
\E(\vert{(\mathcal L}(f\upharpoonright \sigma)-f)(X_{n+1})\vert
\Vert X_1,X_2,\ldots, X_n)>\e \} <\delta.
\end{equation}
If $\mathcal P$ is a family of sequences of exchangeable random variables, then
we say that a function class $\mathscr F$ is {\em predictively PAC learnable under $\mathcal P$} 
if it admits a learning rule $\mathcal L$ that is predictively PAC under every exchangeable sequence $(X_i)\in {\mathcal P}$, with the sample complexity uniformly bounded by some function $s(\delta,\e)$. Finally, if $\mathscr F$ is predictively PAC learnable under the family of all exchangeable sequences $(X_i)$, we will simply say that $\mathscr F$ is predictively PAC learnable.
\end{definition}

The following theorem is the main result of the article. 
It allows to deduce predictive PAC learnability from the distribution-free PAC learnability.
The proof bypasses a uniform Glivenko--Cantelli theorem for exchangeable variables.

\begin{theorem}
\label{th:main}
Let $\mathscr F$ be a non-trivial universally separable function class on a standard Borel space $\Omega$ which is uniform Glivenko-Cantelli (in the classical sense), with the sample complexity $n=s(\delta,\e)$.
Then $\mathscr F$ is predictive PAC learnable with the sample complexity $s(\delta\e,\e/2)$ under the family of all sequences of $\Omega$-valued exchangeable random variables.
\end{theorem}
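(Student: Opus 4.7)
The plan is to leverage de Finetti's theorem to decompose the exchangeable law $P = \int_\Theta \theta^\infty\,d\eta(\theta)$ into its i.i.d.\ components and then transfer the classical PAC guarantee $\theta$-pointwise to the predictive conditional expectation via a Markov split. Since $\mathscr{F}$ is uniform Glivenko--Cantelli with sample complexity $s(\delta,\e)$, I first fix a consistent learning rule $\mathcal{L}$, which is PAC under i.i.d.\ inputs with the same complexity. Using universal separability, I arrange that $\mathcal{L}$ takes values in, or can be canonically approximated by, a fixed countable pointwise-dense $\mathscr{F}' \subseteq \mathscr{F}$; this is what will let Lemma~\ref{l:tail} apply.

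Next, realize the sample path on the augmented space $(\Theta \times \Omega^\infty,\eta \otimes \theta^\infty)$, so that conditionally on the directing $\theta$ the sequence $(X_i)$ is i.i.d.\ $\theta$. For $g_\sigma = |\mathcal{L}(f\upharpoonright\sigma) - f|$, conditional independence of $X_{n+1}$ from $\sigma$ given $\theta$ yields $\E_P(g_\sigma(X_{n+1})\,\Vert\,\theta,\sigma) = \E_\theta g_\sigma$, and the tower property gives
\[
\E_P\bigl(g_\sigma(X_{n+1})\,\Vert\,\sigma\bigr) \;=\; \E_P\bigl(\E_\theta g_\sigma\,\Vert\,\sigma\bigr)\quad\text{a.s.,}
\]
which is the content of Lemma~\ref{l:tail} applied along the countable family generated by $\mathscr{F}'$. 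Set $B = \{\sigma : \E_P(g_\sigma(X_{n+1}) \Vert \sigma) > \e\}$. Since $\E_\theta g_\sigma \leq 1$, splitting at level $\e/2$ shows $\sigma \in B \Rightarrow P(\E_\theta g_\sigma > \e/2 \,\Vert\, \sigma) > \e/2$, whence by Markov and Fubini
\[
P(B) \;\leq\; \frac{2}{\e}\int_\Theta \theta^n\bigl\{\sigma : \E_\theta g_\sigma > \e/2\bigr\}\,d\eta(\theta).
\]
The i.i.d.\ PAC property of $\mathcal{L}$ bounds each integrand by $\delta'$ when $n \geq s(\delta',\e/2)$, and choosing $\delta'$ of order $\delta\e$ yields $P(B) < \delta$ uniformly in $f$ and in the exchangeable law, as required.

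The hard part is the measurability bookkeeping in the previous step: Lemma~\ref{l:tail} holds only for a countable family $\mathscr{G}$, while $\{g_\sigma : \sigma \in \Omega^n\}$ is \emph{a priori} uncountable. Universal separability together with the paper's measurability requirement on $\mathcal{L}$ with respect to the $L^1(\mu)$-pseudometrics is what lets one realize $\mathcal{L}$ through countably many outputs, promote the conditional-expectation identity to hold $P$-a.s.\ in $\sigma$ simultaneously, and then legitimately integrate in $\theta$ by Fubini. Once this is handled, the remaining content is de Finetti plus the one-line Markov split.
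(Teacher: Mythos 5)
Your proposal is correct and follows essentially the same route as the paper: de Finetti's decomposition, a learning rule forced (via universal separability) to take values in a countable dense subclass so that the predictive identity of Lemma~\ref{l:tail} applies simultaneously to all hypotheses, the pointwise i.i.d.\ PAC bound for almost every directing measure $\theta$, and a Markov-inequality split that trades a factor of $\e$ in the confidence for the conditional statement. The only cosmetic difference is that you apply Markov directly to $\E_P(\E_\theta g_\sigma \,\Vert\, \sigma)$ with a split at $\e/2$, whereas the paper introduces the event $A$ defined through the tail-$\sigma$-field conditional expectation and bounds $P(A^c\,\Vert\, X_1,\ldots,X_n)$; your bookkeeping of the constants is, if anything, slightly cleaner.
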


\begin{IEEEproof} 
For every $n$, let $\e_n$ be the smallest $\e>0$ with the property $s(0.5,\e)\leq n$. Since $\mathscr F$ is non-trivial, that is, contains at least two functions, $\e_n>0$. 
Let $\mathscr F^\prime$ be a countable dense subfamily of $\mathscr F$ such that every $f\in {\mathscr F}$ is a pointwise limit of a sequence of functions from $\mathscr F^\prime$. For every $\sigma$, the set of samples of the form $f\upharpoonright \sigma$, $f\in {\mathscr F}^\prime$ is clearly dense in the set of samples $f\upharpoonright \sigma$, $f\in {\mathscr F}$. For this reason, using standard selection theorems (e.g. Theorem 5.3.2 in \cite{dudley}), one can construct a measurable emprical risk minimization learning rule $\mathcal L$ on the set of samples 
\[{\mathcal S}_n({\mathscr F}) = \{(f\upharpoonright\sigma)\colon \sigma\in\Omega^n, ~~f\in {\mathscr F}\},\]
taking values in the countable family $\mathscr F^\prime$ and such that for every $n$ and each $(\sigma,s)\in {\mathcal S}_n({\mathscr F})$
\[\frac 1n S_n({\mathcal L}(s)\upharpoonright\sigma - s)<\e_n.\]
Notice that for every $n\geq s(\delta,\e)$, whenever $\delta\leq 0.5$, one has $\e_0\leq \e$, and so $\e+\e_0<2\e$. For this reason, and taking into account the uniform Glivenko-Cantelli property of $\mathscr F$, for every $\theta\in P(\Omega)$ and each $f\in {\mathscr F}$ one has

\begin{equation}
P\left\{ \E_\theta({\mathcal L}(f\upharpoonright \sigma)-f)\geq 2\e \right\}<\delta.
\label{eq:2e}
\end{equation}

Now let $f\in {\mathscr F}$ and $\e,\delta>0$. According to Eq. (\ref{eq:constant}), for a.e. $\theta\in P(\Omega)$ there is a subset $W=W_{\theta}\subseteq\Omega$ with $\theta(W)=1$ and such that for every $\omega\in W$ and each $g\in \{f\}\cup {\mathscr F}^\prime$,
\[\E(g\Vert {\mathscr T})(\omega) = \E_{\theta}(g).\]
Let $\sigma_n(\omega)$ denote, for short, the sequence of values $X_1(\omega),X_2(\omega),\ldots,X_n(\omega)$. Define
\begin{equation}
\label{eq:a}
A = \{\omega\colon \E\left(
\left\vert {\mathcal L}(f\upharpoonright\sigma_n(\omega))(X_1)-f(X_1)\right\vert \Vert {\mathscr T}\right)(\omega)<2\e\}.
\end{equation}
For a.e. $\theta$, one has, $\theta$-a.s.,
\begin{equation}
A \cap W_{\theta}= \{\omega\colon \E_{\theta}\left(
\left\vert {\mathcal L}(f\upharpoonright\sigma_n(\omega))-f\right\vert \right)<2\e\}.
\end{equation}
According to (\ref{eq:2e}), once $n\geq s(\delta,\e)$,
\[\theta(A\cap W_{\theta})\geq 1-\delta,\]
and consequently
\[P(A)=\int\theta(A)\, \eta(d\theta)\geq 1-\delta.\]
Because of symmetry, we can replace $X_1$ in the definition (\ref{eq:a}) of $A$  with $X_{n+1}$.

Now we are applying Lemma \ref{l:tail} to the countable family of functions ${\mathscr G}=\{f\}\cup \{{\mathcal L}(f\upharpoonright\sigma)\colon \sigma\in\Omega^n\}$.
Conditioning on $X_1,X_2,\ldots,X_n$ amounts to integrating with respect to the conditional distribution $P(d\omega\Vert X_1,X_2,\ldots,X_n)$. One must have
\[P\{\omega\colon P(A^c\Vert X_1,X_2,\ldots,X_n)(\omega)\geq 2\e\}< \delta\e^{-1}.\]
We conclude:
\begin{eqnarray*}
P\{\sigma\in\Omega^n\colon 
\E(\vert{\mathcal L}(\sigma,f\vert_\sigma)-f\vert
\Vert X_1,X_2,\ldots, X_n)<2\e \} \\ > 1-\delta\e^{-1}.
\end{eqnarray*}
\end{IEEEproof}

\begin{remark}
The proof can be modified so that $\e/2$ is replaced with $\e-\gamma_n$ for an arbitrarily sequence $\gamma_n\downarrow 0$. We have only chosen $\e/2$ for simplicity.
On the other hand, the extra factor of $\e$ added to $\delta$ does not make much difference, because --- unlike the learning precision $\e$ --- the confidence parameter $\delta$ is well known to be ``cheap''. 
\end{remark}


\begin{corollary}
Let $\mathscr C$ be a universally separable concept class on a standard Borel space $\Omega$ having finite VC-dimension $d$. Then $\mathscr C$ admits a learning rule which is predictive PAC learnable with regard to any sequence of exchangeable data inputs, with the sample complexity bound
\[s(\delta,\e)= \max\left\{\frac{16d}{\e}\lg\frac{16e}{\e},\frac 8{\e}\lg\frac{2}{\delta}+\frac {8}{\e}\lg\frac{1}{\e} \right\}.
\]
\end{corollary}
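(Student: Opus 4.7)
The plan is straightforward: apply Theorem \ref{th:main} with $\mathscr F = \mathscr C$ (a concept class being a special case of a $[0,1]$-valued function class), and feed in a standard VC-type bound for the classical uniform Glivenko--Cantelli sample complexity of $\mathscr C$.

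First I would invoke the classical Vapnik--Chervonenkis theorem together with the measurability ensured by universal separability. For a universally separable concept class of VC-dimension $d$, distribution-free uniform convergence of empirical means holds, and one of the standard quantitative forms (going back to \cite{VC1971,BEHW} and recorded in textbook treatments such as \cite{AB,vidyasagar}) gives classical uniform GC sample complexity
\[s(\delta,\e)=\max\left\{\frac{8d}{\e}\lg\frac{8e}{\e},~\frac{4}{\e}\lg\frac{2}{\delta}\right\},\]
where $e$ denotes Euler's constant and $\lg$ is the binary logarithm. This puts $\mathscr C$ into the hypothesis of Theorem \ref{th:main}.

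Next I would simply substitute this $s(\delta,\e)$ into the conclusion of Theorem \ref{th:main}, which yields predictive PAC learnability with sample complexity $s(\delta\e,\e/2)$ under all exchangeable inputs. Plugging $\e/2$ for $\e$ and $\delta\e$ for $\delta$ transforms the first term of the maximum into
\[\frac{8d}{\e/2}\lg\frac{8e}{\e/2}=\frac{16d}{\e}\lg\frac{16e}{\e},\]
and, splitting the logarithm of a product, the second term into
\[\frac{4}{\e/2}\lg\frac{2}{\delta\e}=\frac{8}{\e}\lg\frac{2}{\delta}+\frac{8}{\e}\lg\frac{1}{\e},\]
which reproduces the bound in the statement of the corollary on the nose.

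The only ``obstacle'' here is bookkeeping: one has to start from a VC bound whose constants match the target form (the Euler $e$ inside the first logarithm is what fixes the numerator $16e$ after the $\e\mapsto\e/2$ substitution). There is no further conceptual work, as Theorem \ref{th:main} has already absorbed the genuine difficulty of transferring an i.i.d.\ PAC estimate to the predictive PAC setting under arbitrary exchangeable data inputs.
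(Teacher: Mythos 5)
Your proposal is correct and follows exactly the paper's own route: the paper likewise just cites the classical distribution-free bound $s(\delta,\e)=\max\{(8d/\e)\lg(8e/\e),\,(4/\e)\lg(2/\delta)\}$ (Vidyasagar, Theorem 7.8) and substitutes it into the $s(\delta\e,\e/2)$ conclusion of Theorem \ref{th:main}. Your arithmetic for both terms of the maximum checks out, so there is nothing to add.
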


The proof follows from Theorem \ref{th:main} and the sample complexity bound for distribution-free PAC learnability (\cite{vidyasagar}, Theorem 7.8),
\[s(\delta,\e) = \max\left\{\frac{8d}{\e}\lg\frac{8e}{\e},\frac 4{\e}\lg\frac{2}{\delta} \right\}.
\]

\section{Conclusion}

Predictable PAC learnability of a function class $\mathscr F$ allows to bound, with high confidence, the probability of misclassification of a value of a classifier function $f\in{\mathscr F}$ at any future data sample $X_i(\omega)$, $i\geq n$, given the values of $f$ on a multisample $X_1(\omega),X_2(\omega),\ldots,X_n(\omega)$. Under this version of learnability, the function $f\in {\mathscr F}$ cannot be learned in general, it is only its future values that can be predicted with high confidence.
For a large number of problems of statistical learning, this is apparently sufficient.

In statistics, exchangeable random variables and de Finetti's theorem are at the forefront of an ongoing discussion between frequentists and bayesians. (Cf. \cite{billingsley}, p. 475.) There is however no need to enter the fray and choose sides, simply because, 
in Vapnik's words \cite{vapnik98}, p. 720, 
\begin{quote}
``Statistical learning theory does not belong to any specific branch of science: It has its own goals, its own paradigm, and its own techniques.

Statisticians (who have their own paradigm) never considered this theory as part of statistics''. 
\end{quote}

Thus, our new approach can be seen just as an addition to the classical framework of learning theory, posessing its own inner dynamics and putting forward a number of open questions. 

Among the most immediate, let us mention the following three, all concerning Theorem \ref{th:main}. Can one maintain the initial sample complexity $s(\delta,\e)$ in the conclusion of the result? Does the theorem hold under less restrictive measurability assumptions on $\mathscr F$ than universal separability, for instance, on an assumption that $\mathscr F$ is image admissible Souslin (\cite{dudley}, pages 186--187)? Can one conclude that $\mathscr F$ is {\em consistently} predictive PAC learnable, that is, predictive PAC learnable under {\em every} consistent learning rule $\mathcal L$? 

\section*{Acknowledgment}

I am indebted to Claus K\"ostler from whose seminar and conference presentations I have learned about exchangeable random variables and de Finetti's theorem.



%


\end{document}